
\documentclass{article}

\usepackage{times}
\usepackage{graphicx} 
\usepackage{subfigure} 

\usepackage{natbib}

\usepackage{algorithm}
\usepackage{algorithmic}

\usepackage{subfigure}
\usepackage{stmaryrd} 



\usepackage[accepted,nohyperref]{icml2014}
\usepackage{amsmath,amssymb}
\newcommand{\BlackBox}{\rule{1.5ex}{1.5ex}}  
\newenvironment{proof}{\par\noindent{\bf Proof\ }}{\hfill\BlackBox\\[2mm]}

\newtheorem{theorem}{Theorem}
\newtheorem{lemma}[theorem]{Lemma}

\icmltitlerunning{A PAC-Bayesian Bound for Lifelong Learning}

\DeclareMathOperator*{\E}{\operatorname{\mathbf{E}}}
\DeclareMathOperator{\KL}{\operatorname{\textsl{KL}}}
\newcommand{\sign}{\operatorname{sign}}
\newcommand{\er}{\operatorname{er}}
\newcommand{\eer}{\widehat{\er}}
\newcommand{\ter}{\widetilde{\er}}
\newcommand{\tr}{\operatorname{tr}}
\newcommand{\const}{\mathit{const}}
\newcommand{\Erf}{\operatorname{erf}}

\newcommand{\argmin}{\operatorname*{argmin}}

\renewcommand{\H}{H} 

\begin{document} 

\twocolumn[
\icmltitle{A PAC-Bayesian Bound for Lifelong Learning}

\icmlauthor{Anastasia Pentina}{apentina@ist.ac.at}
\icmladdress{IST Austria (Institute of Science and Technology Austria),
			Am Campus 1, 3400 Klosterneuburg, Austria}
\icmlauthor{Christoph H. Lampert}{chl@ist.ac.at}
\icmladdress{IST Austria (Institute of Science and Technology Austria),
            Am Campus 1, 3400 Klosterneuburg, Austria}

\icmlkeywords{boring formatting information, machine learning, ICML}

\vskip 0.3in
]

\begin{abstract} 
Transfer learning has received a lot of attention in the machine 
learning community over the last years, and several effective 
algorithms have been developed. 
However, relatively little is known about their theoretical properties, 
especially in the setting of \emph{lifelong learning}, where the goal 
is to transfer information to tasks for which no data have been observed 
so far. 

In this work we study lifelong learning from a theoretical perspective.
Our main result is a PAC-Bayesian generalization bound that offers 
a unified view on existing paradigms for transfer learning, such as 
the transfer of parameters or the transfer of low-dimensional 
representations. 
We also use the bound to derive two principled lifelong learning algorithms,  
and we show that these yield results comparable with existing methods.
\end{abstract} 

\section{Introduction}
Today, many problems can be solved equally well or better by machine learning algorithms as by humans. 
However, these algorithms typically require large amount of training data to 
achieve acceptable results, whereas humans are able to learn new concepts 
from just a few examples. 
Presumably this difference comes from the fact that most machine learning systems are trained from scratch for each task at hand, whereas humans exploit context
and knowledge they acquired previously while solving other tasks. 

This observation motivates research on transfer learning: 
\emph{how can information from previously learned tasks be used for solving new tasks?}
Several scenarios of how this question can be formalized have been identified.
%
Here, we discuss some that are most relevant in the context of supervised 
learning.
As general setup, one assumes that one or more learning tasks have been 
observed, typically in form of labeled training sets. The methods then differ
in how this information is meant to be used.
In the \emph{multitask setting}~\citep{caruana1997multitask}, the goal is 
simply to perform well on all of the tasks. 
In \emph{domain adaptation}~\citep{bridle1990recnorm}, the goal is to perform  
well on a new task for which only unlabeled or very few labeled data samples 
are observed.
Finally, in \emph{lifelong learning} or \emph{learning to learn}~\citep{thrun1995lifelong}, 
the goal of the learner is to perform well on future tasks, for which so far no
data has been observed. 
In this work we focus on the third setting.

\paragraph{Lifelong Learning.}
For lifelong learning to make sense, one must assume a 
relation between the observed tasks and the future tasks.
To formalize this, \citet{Baxter} introduced the notion of a task 
environment as a set of possible tasks that might need to be 
solved at some time. The observed tasks are sampled randomly 
from the environment according to an unknown task distribution. 
In this setting, Baxter also provided the first theoretical guarantees 
by proving generalization bounds in the framework of VC theory~\citep{vapnik1998statistical}.
After this work, however, progress on the theoretical 
understanding of lifelong machine learning slowed down. 
Many algorithms for transfer learning were developed 
and found empirically to work well in many cases.
However, except for a few exception, such as~\cite{journals/ml/Maurer09, maurer13}, their theoretical justifications 
are so far not well understood. 

In this work, we aim at making progress on the theoretical justifications
of lifelong learning. In Section~\ref{sec:bound} we prove a general 
PAC-Bayesian generalization bound for lifelong learning that allows 
quantifying the relation between the expected loss on a future learning 
task to the average loss on the observed tasks.
In contrast to Baxter's results, our bound has the advantage   
that its value depends on the representation of the data and on the 
learning algorithm used to solve the tasks. This makes it possible to 
interpret the bound as a quality measure of the transferred information. 
Therefore, by optimizing the measure we obtain principled algorithms for lifelong learning. 

In Sections \ref{parameter-transfer} and \ref{representation-transfer}
we demonstrate this process in two cases: assuming that the solutions 
to all tasks can be represented by a single parameter vector plus small 
task-specific perturbation~\citep{Evgeniou}, we obtain an algorithm that 
resembles previously proposed methods for regularizing the weight vectors 
of future tasks using linear combinations of weight vectors of previous 
tasks, such as~\citep{Yang,Aytar}.

An alternative assumption is that the solution vectors to tasks can differ 
significantly, but that they all lie in a common feature subspace of low 
dimension.
In this setting, our bound provides an algorithm in which the observed tasks 
are used to identify the most promising subspace of features, such that learning 
for future tasks needs to take place only within the reduced feature space. 
This procedure is related to existing methods for representation and 
dictionary learning, e.g.\ \citep{Argyriou,Kumar}, which have also successfully 
been applied in the lifelong learning setting~\citep{Ruvolo}. 
In a case of linear regression,~\citet{journals/ml/Maurer09} used this assumption 
to prove a generalization bound in the PAC framework, 
by using the concept of environment of tasks from~\citet{Baxter} and Rademacher complexity.   
Similar results were obtained in~\cite{maurer13} in the case of sparsity constraints.
\paragraph{The PAC-Bayesian framework.}
For the convenience of readers who are not familiar with the PAC-Bayesian 
framework, we introduce the most relevant concepts from the literature here. 
For more details, see~\citep{langford2005, Seeger, catoni2007pac}.

PAC-Bayesian theory studies the properties of randomized predictors, called 
\emph{Gibbs predictors}.
Formally, let $\mathcal{X}$ be an input set, $\mathcal{Y}$ an output set and 
$\H \subset \{ h: \mathcal{X}\to\mathcal{Y}\}$ a set of 
prediction functions (hypotheses). 
For any probability distribution $P$ over $\H$, the Gibbs 
predictor associated with $P$ is the stochastic predictor that 
for any $x\in\mathcal{X}$ randomly samples a hypothesis $h\sim P$ 
and then returns $h(x)$. 

Assume that we are given a set $S=\{(x_1,y_1),\dots,(x_m,y_m)\}$ of 
i.i.d.\ samples from an unknown probability distribution $D$ over 
$\mathcal{X}\times\mathcal{Y}$. 
For any loss function, $\ell:\mathcal{Y}\times \mathcal{Y}\rightarrow [0,1]$, let $\er(Q)$ 
denote the expected loss of the Gibbs classifier associated with $Q$, 
i.e.\ $\er(Q) = \E_{h\sim Q}\E_{(x,y)\sim D} \ell(h(x),y)$, and let $\eer(Q)$ 
denote the expected empirical loss, i.e.\ $\eer(Q) = \E_{h\sim Q}\frac1m\sum_{i=1}^m \ell(h(x_i),y_i)$.
It is then possible to prove generalization bounds such as the following:
with probability at least $1-\delta$ (over the sampling of $S$) we 
have for all distributions $Q$ over $H$~\cite{mcallester1999pac}
\begin{equation}
\er(Q)\! \leq \eer(Q)\!+\!\sqrt{\frac{\KL(Q||P) + \log\frac{1}{\delta} +\log m + 2}{2m-1}},
\label{eq:PACBayes}
\end{equation}
where $P$ is a reference or \emph{prior} distribution over $\H$ that must be chosen before observing the samples $S$, and $\KL(Q||P)$ is the Kullback-Leibler divergence, i.e.\ a measure how different $Q$ is from $P$. 
As such, the bound resembled the typical trade-off in regularized risk minimization 
between the training loss and a regularizer~\citep{vapnik1998statistical}.

Inequality~\eqref{eq:PACBayes} is uniform with respect to $Q$, so it holds 
regardless of which $Q$ we choose. In particular, we can choose it after 
seeing $S$, and for this reason $Q$ is typically referred to 
as \emph{posterior} distribution in this context. 

Choosing $Q$ such that it minimizes the right hand side of the bound we obtain a 
Gibbs predictor that can be expected to be a good choice for the learning task 
at hand, since its expected loss is controlled by a hopefully small quantity. 
While the inequality~\eqref{eq:PACBayes} holds regardless the agreement between the data distribution and the prior distribution $P$, the value of the right hand side of the bound strongly depends on the choice of $P$. 
Therefore, one would prefer a prior that allows learning a posterior that is at the same time close to the prior ($KL(Q||P)$ is small) and shows good performance on the training set ($\eer(Q)$ is small).  

\section{PAC-Bayesian Lifelong Learning}\label{sec:bound}
To develop a PAC-Bayesian theory of lifelong learning we adopt the concept of 
a task environment from~\citet{Baxter}. 
We assume an unknown set of possible tasks $T$, all of which share 
the same input space $\mathcal{X}$, output space $\mathcal{Y}$, hypothesis set $H$
and loss function $\ell:\mathcal{Y}\times\mathcal{Y}\to [0,1]$.
The lifelong learning system (which we will call an \emph{agent}) 
observes $n$ tasks $t_1,\dots,t_n$ that are sampled i.i.d.\ from $T$ 
according to some unknown distribution over tasks. 
For each task $t_i$ the agent observes a training set 
$S_i = \{(x_{i1}, y_{i1}),\dots,(x_{im_i},y_{im_i})\}$ that is
sampled i.i.d.\ according to the task's unknown data distribution $D_i$.
To solve individual tasks, the agent makes use of an arbitrary but fixed 
learning algorithm, i.e.\ a deterministic procedure that, given a training 
set $S$ and a form of prior knowledge $P$, outputs a posterior distribution 
$Q=Q(S,P)$ over $\H$.
The agent makes predictions using Gibbs predictor associated with $Q$. 
Staying within a PAC-Bayesian setting we assume that the prior 
knowledge, $P$, is encoded in a probability distribution over $H$. 
For concrete examples of the above setting see 
Sections~\ref{parameter-transfer} and~\ref{representation-transfer}. 

The goal of the agent is to use the information contained in the observed 
tasks to identify prior knowledge that will cause as good as possible 
performance on new (so far unobserved) tasks from the same environment. 
This setting is strictly harder than multi-task learning or domain 
adaptation, since no data for the future tasks to be solved is 
available at the time the agent makes its decision. 
In particular, previously developed techniques for learning priors are
not directly applicable: 
first, note that we cannot use ordinary generalization bounds, such as~\eqref{eq:PACBayes}, 
to identify optimal priors, since they only hold uniformly in $Q$ if the prior 
is chosen independently from the training set. 
\citet{catoni2007pac} derived an expression for the overall "best" prior, 
i.e.\ the distribution resulting in the smallest possible bound value. 
However, it is generally of a non-parametric form and uncomputable 
without full information about the data distribution. 
\citet{parrado2012pac} showed that priors can be learned by splitting 
the available training data into two parts, one for learning a prior, 
one for learning the predictor. This, however, requires training data 
for the task at hand, which is not available in the lifelong setting. 

Our first contribution in this work is the insight that one should 
treat the prior $P$ itself as a random variable. 
Let $\mathcal{P}$ be an initial distribution over all possible 
priors, which we call \emph{hyperprior} in concordance with the 
Bayesian nomenclature.
For learning a prior the agent uses the observed tasks to adjust 
its original hyperprior into a \emph{hyperposterior} distribution 
$\mathcal{Q}$ over the set of priors. 
This randomized setting allows us to follow a PAC-Bayesian path analogous 
to classical results. We will obtain a bound that requires a fixed 
hyperprior $\mathcal{P}$ but that holds uniformly with respect to the 
hyperposterior $\mathcal{Q}$. 
The hyperposterior that minimizes the bound will provide us with the 
most promising distribution from which to obtain priors for future 
tasks.

Formally, the goal of the agent is to find $\mathcal{Q}$ that minimizes 
the expected loss $\er(Q_t)$ of a randomly sampled new task $t$ with 
training set $S_t$ and prior $P$ sampled from $\mathcal{Q}$. 
We write
\begin{equation}
\er(\mathcal{Q})=\mathbf{E}_{(t,S_t)} \mathbf{E}_{P\sim\mathcal{Q}}\ \er(Q_t(S_t,P) ),
\label{exp_risk}
\end{equation}
where $Q_t(S_t,P)$ is the posterior obtained by training the learning algorithm with prior $P$ and training sample $S_t$. We call this quantity the \textit{transfer risk}.

We cannot compute $\er(\mathcal{Q})$ because the distributions over the tasks and the tasks' data are both unknown. 
However, we can approximate it by its empirical counterpart, based on $n$ observed tasks
\begin{equation}
\eer(\mathcal{Q}) = \frac{1}{n}\sum\nolimits_{i=1}^n \mathbf{E}_{P\sim\mathcal{Q}}\  \eer(Q_{i}(S_i,P)),
\label{emp_risk}
\end{equation}
which we call \textit{empirical multi-task risk}. 

Our main result is a theorem that bounds the difference between the two quantities defined above. 
%
%
\begin{theorem} 
For any $\delta > 0$ the following inequality holds with probability 
at least $1-\delta$ (over the training samples $\{S_1,\dots,S_n\}$)
for all hyperposterior distributions $\mathcal{Q}$
\begin{align}
&\er(\mathcal{Q})\leq \eer(\mathcal{Q}) +\frac{1}{\sqrt{n}}\left(\KL(\mathcal{Q}\|\mathcal{P}) +\frac{1}{8}-\log\frac{\delta}{2}\right)
\label{main_result}
\\
&+ \frac{1}{n\sqrt{\bar{m}}}\KL((\mathcal{Q},Q^n)\|(\mathcal{P},P^n))
+ \frac{1}{\sqrt{\bar{m}}}\left(\frac{1}{8}-\frac{1}{n}\log\frac{\delta}{2}\right)
\notag\end{align}
where $(\mathcal{Q},Q^n)=\mathcal{Q}\times \prod_{i=1}^n Q_{i}$ denotes the 
distribution in which we first sample $P$ according to $\mathcal{Q}$ and then use it and the 
data $S_i$ to produce a posterior $Q_i$ for each task $t_i$. 
$(\mathcal{P},P^n)=\mathcal{P}\times \prod_{i=1}^n P$ denotes the distribution in 
which we sample $P$ according to $\mathcal{P}$ and use it as a posterior for all tasks. 
$\bar{m}=\left(\frac{1}{n}\sum_{i=1}^n\frac{1}{m_i}\right)^{-1}$ is the harmonic mean of the sample sizes.
\label{theorem}
\end{theorem}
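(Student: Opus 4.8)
The plan is to introduce an intermediate quantity and split the gap $\er(\mathcal{Q})-\eer(\mathcal{Q})$ into a task-level part and a data-level part, then run a PAC-Bayesian argument on each. Define the intermediate risk
\[
\ter(\mathcal{Q}) = \frac{1}{n}\sum_{i=1}^n \E_{P\sim\mathcal{Q}}\,\er(Q_i(S_i,P)),
\]
which averages the \emph{true} per-task risk over the observed tasks. Then $\er(\mathcal{Q})-\eer(\mathcal{Q}) = \big(\er(\mathcal{Q})-\ter(\mathcal{Q})\big)+\big(\ter(\mathcal{Q})-\eer(\mathcal{Q})\big)$, where the first difference reflects the randomness of which $n$ tasks were drawn from the environment and the second reflects the randomness of the $m_i$ samples within each task. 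I would bound each difference separately, each on a $\delta/2$ event, and finish with a union bound.

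For the task-level term I would treat each prior $P$ as a ``meta-hypothesis'' whose loss on an observed task is the bounded quantity $\er(Q_i(S_i,P))\in[0,1]$, with true meta-risk $\mathcal{L}(P)=\E_{(t,S_t)}\er(Q_t(S_t,P))$. Since the pairs $(t_i,S_i)$ are i.i.d., I would apply the Donsker--Varadhan change-of-measure inequality from $\mathcal{Q}$ to $\mathcal{P}$, then Markov's inequality (spending $\delta/2$), and then swap expectations, which is legitimate because the hyperprior $\mathcal{P}$ is fixed and data-independent. This reduces the problem to bounding $\E_{P\sim\mathcal{P}}\big[\prod_i \E_{S_i} e^{(\lambda/n)(\mathcal{L}(P)-\er(Q_i(S_i,P)))}\big]$, and Hoeffding's lemma on each centered, range-one term yields a factor $e^{\lambda^2/(8n)}$. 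Optimizing with $\lambda=\sqrt{n}$ produces exactly $\frac{1}{\sqrt{n}}\big(\KL(\mathcal{Q}\|\mathcal{P})+\tfrac18-\log\tfrac{\delta}{2}\big)$.

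For the data-level term I would run the same machinery one level down, on the joint distribution over tuples $(P,h_1,\dots,h_n)$. Writing $\ter(\mathcal{Q})-\eer(\mathcal{Q})$ as a single expectation under $(\mathcal{Q},Q^n)$ of $\frac{1}{n}\sum_i(\er_i(h_i)-\eer_i(h_i))$, the change of measure is now taken to the reference $(\mathcal{P},P^n)$. The crucial point is that under this reference every $h_i$ is drawn from the data-independent prior $P\sim\mathcal{P}$, so after Markov (the second $\delta/2$) and the expectation swap, independence across the $n$ tasks and across the $m_i$ samples within a task lets Hoeffding factorize into $\exp\!\big(\tfrac{(\lambda')^2}{8n^2}\sum_i\tfrac{1}{m_i}\big)=\exp\!\big(\tfrac{(\lambda')^2}{8n\bar m}\big)$, which is where the harmonic mean $\bar m$ enters. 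The chain rule for KL divergence gives $\KL((\mathcal{Q},Q^n)\|(\mathcal{P},P^n))=\KL(\mathcal{Q}\|\mathcal{P})+\E_{P\sim\mathcal{Q}}\sum_i\KL(Q_i\|P)$, so this is the correct complexity term; choosing $\lambda'=n\sqrt{\bar m}$ produces precisely the two remaining summands of the bound.

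I expect the data-level step to be the main obstacle. The delicacy is that the posteriors $Q_i=Q_i(S_i,P)$ depend both on the samples and on $P\sim\mathcal{Q}$, which is itself selected after seeing the data, so none of the distributions actually carrying the expectation is data-independent. The whole argument hinges on changing measure to the fixed reference $(\mathcal{P},P^n)$ \emph{before} invoking Markov, so that the expectation swap and the per-task, per-sample factorization of the moment-generating function are valid; the remaining work is the bookkeeping that turns $\sum_i 1/m_i$ into the clean $1/\sqrt{\bar m}$ coefficient via the choice $\lambda'=n\sqrt{\bar m}$.
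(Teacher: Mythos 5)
Your proposal is correct and takes essentially the same route as the paper: the same intermediate expected multi-task risk $\ter(\mathcal{Q})$, the same two-level decomposition handled on two $\delta/2$ events combined by a union bound, and the same change-of-measure argument (Donsker--Varadhan, then Markov, then the expectation swap justified by the data-independent references $\mathcal{P}$ and $(\mathcal{P},P^n)$, then per-factor Hoeffding) with the same choices $\lambda=\sqrt{n}$ and $\lambda'=n\sqrt{\bar m}$. The only cosmetic difference is that the paper abstracts this machinery into a single lemma (Lemma~\ref{lemma:main_lemma}) applied twice with appropriate instantiations, whereas you inline the argument at each level.
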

\begin{proof}
To prove Theorem~\ref{theorem} we introduce an intermediate quantity that can be seen as an \textit{expected multi-task risk}
\begin{equation}
\ter(\mathcal{Q}) = \underset{P\sim\mathcal{Q}}{\mathbf{E}}\frac{1}{n}\sum\nolimits_{i=1}^n\underset{h\sim Q_i}{\mathbf{E}}\underset{(x,y)\sim D_i}{\mathbf{E}}\ell(h(x), y).
\end{equation}
First we will bound the uncertainty on the task environment level by bounding the difference 
between transfer error, $\er(\mathcal{Q})$, and expected multi-task error, $\ter(\mathcal{Q})$. 
Then we will bound the uncertainty within observed tasks by bounding the difference between 
expected multi-task error, $\ter(\mathcal{Q})$, and its empirical approximation, $\eer(\mathcal{Q})$.
Our main tool in both cases will be the following lemma.
\begin{lemma}
\label{lemma:main_lemma}
Let $f$ be a random variable taking values in $A$ and 
let $X_1,\dots,X_l$ be $l$ independent random variables with each 
$X_k$ distributed according to $\mu_k$ over the set $A_k$. 
For functions $g_k: A\times A_k\rightarrow [a_k,b_k], \; k=1\dots l$, 
let $\xi_k(f) = \E_{X_k\sim\mu_k}g_k(f,X_k)$ for any fixed value of $f$.
Then for any fixed distribution $\pi$ on $A$ and any $\lambda, \delta>0$ 
the following inequality holds with probability at least $1-\delta$ (over sampling $X_1,\dots,X_l$) 
for all distributions $\rho$ over $A$  
\begin{align}
\notag
&\;\;\;\underset{f\sim\rho}{\mathbf{E}}\sum\nolimits_{k=1}^l\xi_k(f)-\underset{f\sim\rho}{\mathbf{E}}\sum\nolimits_{k=1}^lg_k(h,X_k)\leq
\\
\frac{1}{\lambda}&\left(KL(\rho||\pi) + \frac{\lambda^2}{8}\sum\nolimits_{k=1}^l(b_k-a_k)^2 - \log\delta\right). 
\end{align}
\end{lemma}
For the proof of this lemma, see the Appendix~\ref{appendix}.

In order to bound the difference between $\er(\mathcal{Q})$ and $\ter(\mathcal{Q})$ 
we treat each task $t$ with the corresponding training sample $S_t$ as a random 
variable and apply Lemma~\ref{lemma:main_lemma}. 
Formally, we set $\rho = \mathcal{Q}$, $\pi=\mathcal{P}$, $X_k=(t_k,S_k)$, $l=n$, $f = P$ 
and $g_k(f, X_k)=\frac{1}{n}\underset{h\sim Q_k}{\mathbf{E}}\underset{(x,y)\sim D_k}{\mathbf{E}}l(h(x),y)$
and apply Lemma~\ref{lemma:main_lemma} with $\lambda = \sqrt{n}$. 
Since $a_k = 0$ and  $b_k = \frac{1}{n}$ we obtain with probability at least $1-\delta/2$ that for all $\mathcal{Q}$
\begin{equation}
\er(\mathcal{Q})\leq\ter(\mathcal{Q})+\frac{1}{\sqrt{n}}\left(\KL(\mathcal{Q}||\mathcal{P}) +\frac{1}{8}-\log\frac{\delta}{2}\right).
\label{bound1}
\end{equation}
To bound the difference between $\ter(\mathcal{Q})$ and $\eer(\mathcal{Q})$ 
we apply Lemma~\ref{lemma:main_lemma} to the union of all training samples 
$S'=\bigcup_{i=1}^nS_i$. We set $\rho = (\mathcal{Q}, Q^n)$, $\pi = (\mathcal{P},P^n)$, $X_k = (x_{ij}, y_{ij})$, $l=\sum m_i$, $f=(P, h_1,\dots,h_n)$ 
and $g_k(f, X_k) = \frac{1}{nm_i}\ell(h_i(x_{ij}),y_{ij})$. 
In this setting $a_k=0$ and $b_k=1/(nm_i)$, Lemma~\ref{lemma:main_lemma} with $\lambda = n\sqrt{\bar{m}}$ yields that with probability at least $1-\delta/2$ for all $\mathcal{Q}$
\begin{align}
\ter(\mathcal{Q})&\leq\eer(\mathcal{Q}) + \frac{1}{n\sqrt{\bar{m}}}\KL((\mathcal{Q},Q^n)||(\mathcal{P},P^n))
\notag
\\
&\qquad\qquad+ \frac{1}{8\sqrt{\bar{m}}}-\frac{1}{n\sqrt{\bar{m}}}\log\frac{\delta}{2}.
\label{bound2}
\end{align}
Now~\eqref{main_result} follows by a union bound from~\eqref{bound1} and \eqref{bound2}.
\end{proof}
To get a better understanding of Theorem~\ref{theorem}, we rewrite \eqref{main_result} in the following way:
\begin{align}
&\er(\mathcal{Q}) \leq \eer(\mathcal{Q})+\left(\frac{1}{\sqrt{n}}  + \frac{1}{n\sqrt{\bar{m}}}\right)\KL(\mathcal{Q}\|\mathcal{P})
\label{main_result_rw}
\\
&\ +\frac{1}{n\sqrt{\bar{m}}}\sum_{i=1}^n\underset{P\sim\mathcal{Q}}{\mathbf{E}}\KL(Q_i(S_i,P)\|P) + \const(n,\bar m,\delta).
\notag
\end{align}
We see that the bound contains two types of complexity terms
that correspond to two levels of our model: $\KL(\mathcal{Q}\|\mathcal{P})$ belongs to the level of 
task environment in general, while each $\KL(Q_i(S_i,P)\|P)$ corresponds specifically to the $i$-th task. 

To better understand their roles, we look at the following limit cases: 
when the agent has access to sufficiently 
many tasks ($n\to\infty$) but tasks come with a finite amount of data ($\bar m$ is finite), the 
first complexity term converges to $0$ as $1/\sqrt{n}$. The second complexity term converges to an average $\KL$-divergence 
over tasks and may therefore remain non-zero. 
This means that observing many task gives the agent full knowledge about 
the task environment, but it cannot overcome the uncertainty within each task. 
In the opposite case, if the agent observes unlimited data for each tasks, but only for a 
finite number of tasks ($\bar{m}\to\infty$, $n$ is finite), 
the second complexity term converges to $0$ as $1/\sqrt{\bar{m}}$, while the first one does not, so 
there is still uncertainty on the task environment level. 
Only when both comes together, sufficiently many tasks and sufficient amounts of data per task, 
it is guaranteed that the empirical multi-task risk $\eer(\mathcal{Q})$ converges to the transfer 
risk $\er(\mathcal{Q})$.

A second important aspect of Theorem~\ref{theorem} is that the 
bound~\eqref{main_result} consists only of observable quantities. 
Therefore, we can treat it as a quality measure for 
hyperposteriors $\mathcal{Q}$. 
By minimizing it, we obtain a hyperposterior distribution 
over priors that is adjusted to the particular environment of learning tasks.
Since the bound holds uniformly with respect 
to $\mathcal{Q}$, the guarantees of Theorem~\ref{theorem} also 
hold for the resulting learned hyperposterior, so we can expect 
priors sampled according to the learned hyperposterior to work
well even for future tasks.

In the following sections, we discuss two instantiations of this procedure and show how they relate to previous work on transfer learning.  
\subsection{Parameter Transfer} 
\label{parameter-transfer}
Let $\mathcal{X}\subset\mathbb{R}^d$ and $H$ be a set of linear predictors: 
$h(x)=\langle w, x\rangle$ if $\mathcal{Y}=\mathbb{R}$ or $h(x)=\sign\langle w, x\rangle$ 
if $\mathcal{Y}=\{-1,1\}$, where $w\in \mathbb{R}^d$ is a weight vector. 
One of the common assumptions in multitask or lifelong learning is 
that the weight vectors for different tasks are only minor 
variations of an unknown prototypical vector~\citep{Evgeniou}. 
It can be captured by regularizing the distance to this vector~\citep{Aytar,Yang}:
\begin{equation}
\hat{w} = \arg\min_w \Big(\|w - w_{pr}\|^2 + \frac{C}{m}\sum_{j=1}^m (y_j\!- \langle w, x_j\rangle)^2\Big),
\label{ASVM}
\end{equation} 
where $w_{pr}$ is some function of weight vectors of previously observed 
tasks, e.g.\ just their average, $w_{pr}=\frac{1}{n}\sum_{i=1}^n w_i$.

Theorem~\ref{theorem} allows us to learn an ``optimal" $w_{pr}$ from 
the data, instead of fixing the rule for computing it. 
For this, we choose $P=\mathcal{N}(w_P, I_d)$ and $Q=\mathcal{N}(w_Q, I_d)$, 
i.e.\ unit variance normal distributions with means $w_P$ and $w_Q$, 
respectively. 
The mean $w_P$ is a random variable distributed first according to the 
hyperprior distribution, $\mathcal{P}$, which we set as $\mathcal{N}(0,\sigma I_d)$ 
and later according to the hyperposterior, $\mathcal{Q}$, which we model as $\mathcal{Q}=\mathcal{N}(w_\mathcal{Q}, I_d)$. 
The task of the learning consists of identifying the best $w_\mathcal{Q}$.

As underlying learning algorithm we use Equation~\eqref{ASVM} with regularizer centered at a prior vector.
For any $w_P$ and training set $S=\{(x_i,y_i)_{i=1,\dots,m}\}$ the
posterior, $Q(S,P)=\mathcal{N}(w_Q, I_d)$, is given by 
\begin{equation}
w_Q = \argmin\!\Big(\|w \!-\! w_P\|^2\! +\! \frac{C}{m}\!\sum_{j=1}^m (y_j\! -\!\langle w,x_j\rangle)^2\Big).
\label{learning_alg1}
\end{equation}
This has the closed form solution 
\begin{equation}
w_Q = \big(\frac{m}{C}I_d\! +\! XX^\top \big)^{-1}\big(\frac{m}{C}w_P\! +\! XY\big) = Aw_P\! + \!b,
\label{learning_alg1_solution}
\end{equation}
where $X$ is the matrix with columns $x_1,\dots,x_m$, $Y$ is a column of labels $(y_1,\dots,y_m)^\top$, $A=\left( I_d + \frac{C}{m} XX^\top\right)^{-1}$ and $b = \frac{C}{m}AXY$.

Computing the complexity terms from \eqref{main_result_rw} we obtain
\begin{align}
&\KL(\mathcal{Q}\|\mathcal{P}) = \frac{\|w_\mathcal{Q}\|^2}{2\sigma}+\frac{d}{2}\Big(\log\sigma + \frac{1}{\sigma} - 1\Big)
\notag
\;\;\;\text{and}\\
&\underset{P\sim\mathcal{Q}}{\mathbf{E}}\KL(Q_i(S_i,P)\|P) = \underset{w_P\sim\mathcal{Q}}{\mathbf{E}}\frac{\|(A_i-I_d)w_P + b_i\|^2}{2} 
\notag
\\ 
&\quad= \frac{1}{2}\Big(\|(A_i-I_d)w_\mathcal{Q} + b_i\|^2 + \tr(A_i-I_d)^2\Big).
\label{KL2}
\end{align}

We insert Equations~\eqref{KL2} into the inequality \eqref{main_result_rw} and obtain 
\begin{align}
\forall w_{\mathcal{Q}}\quad \er(w_{\mathcal{Q}})\leq 
\eer(w_{\mathcal{Q}})
+ \frac{\sqrt{n\bar{m}} + 1}{2\sigma n\sqrt{\bar{m}}}\|w_\mathcal{Q}\|^2
\notag
\\
+\frac{1}{2n\sqrt{\bar{m}}}\sum_{i=1}^n\|(A_i-I_d)w_\mathcal{Q} + b_i\|^2
+ \const.
\label{gaussian-bound}
\end{align}

The last thing we have to specify is the loss function $\ell$. We consider two options:
first, the binary classification setting with $0/1$ loss: $\ell(y_1, y_2) = \llbracket y_1\neq y_2\rrbracket$. 
In this case the expected empirical error of the Gibbs classifier is given by the following  
expression \cite{Germain:2009, langford2002}
\begin{equation}
\eer(w_\mathcal{Q})\! =\! \frac{1}{n}\!\sum_{i=1}^n\!\frac{1}{m_i}\!\sum_{j=1}^{m_i}\! \Phi\!\left(\!\frac{y_{ij}x_{ij}^\top(A_iw_{\mathcal{Q}} + b_i)}{\sqrt{x_{ij}^\top(I_d\!+\!A_iA_i^\top)x_{ij}}}\!\right)\!,
\label{emp_error_gaus}
\end{equation}
where $\Phi(z) = \frac{1}{2}\big(1 - \Erf(\frac{z}{\sqrt{2}})\big)$ and 
$\Erf(z)=\frac{2}{\sqrt{\pi}}\int_0^z e^{-t^2}dt$ is the Gauss error function.

For a practical algorithm, one typically would prefer a bound on the loss of a deterministic 
classifier rather than of the stochastic Gibbs classifier.
For $0/1$-loss Theorem~\ref{theorem} provides this, since the Gibbs error is at 
most twice smaller than the expected error 
of the classifier defined by $A_iw_{\mathcal{Q}} + b_i$~\citep{mcallester2003simplified, DBLP:journals/jmlr/LavioletteM07}. 
Inserting \eqref{emp_error_gaus} in \eqref{gaussian-bound} 
and multiplying the left hand side by $\frac{1}{2}$ we obtain the following inequality:
\begin{align}
&\forall w_{\mathcal{Q}}\quad \frac{1}{2}\underset{(t,S_t)}{\mathbf{E}}\underset{(x,y)\sim D_t}{\mathbf{E}}[y\neq\sign\langle A_tw_{\mathcal{Q}} + b_t, x\rangle]\leq 
\label{gaussian-0-1-bound}
\\
&\frac{\sqrt{n\bar{m}} + 1}{2\sigma n\sqrt{\bar{m}}}\|w_\mathcal{Q}\|^2
+\frac{1}{2n\sqrt{\bar{m}}}\sum_{i=1}^n\|(A_i-I_d)w_\mathcal{Q} + b_i\|^2
\notag
\\
&+ \frac{1}{n}\sum_{i=1}^n\frac{1}{m_i}\sum_{j=1}^{m_i} \Phi\left(\frac{y_{ij}x_{ij}^\top(A_iw_{\mathcal{Q}} + b_i)}{\sqrt{x_{ij}^\top(I_d+A_iA_i^\top)x_{ij}}}\right)+ \const. 
\notag
\end{align}

For regression tasks, we consider the case of truncated squared loss, 
$\ell(y_1,y_2)=\min\{ (y_1-y_2)^2, 1\}$ (the truncation is necessary to 
fulfill the condition of a bounded loss function). 
Since $\ell(y_1,y_2)\leq(y_1-y_2)^2$, we can substitute $\eer(w_{\mathcal{Q}})$ 
in \eqref{gaussian-bound} by the empirical error of the Gibbs predictor 
with squared loss without violating the inequality. 
This error differs from the error of the predictor that is defined 
by $A_iw_{\mathcal{Q}} + b_i$ only by a constant that does not depend 
on $w_\mathcal{Q}$. 
An elementary calculation shows that for truncated squared loss $\ell$, 
as in the case of $0/1$ loss, the error of Gibbs predictor is at least 
one half of the expected error of the predictor defined by $A_iw_{\mathcal{Q}}+b_i$. 
Therefore in this case we obtain a result similar to the inequality \eqref{gaussian-0-1-bound}
\begin{align}
&\forall w_{\mathcal{Q}}\quad \frac{1}{2}\underset{(t,S_t)}{\mathbf{E}}\underset{(x,y)\sim D_t}{\mathbf{E}}\min\{(y-\langle A_tw_{\mathcal{Q}} + b_t, x\rangle)^2, 1\}\leq 
\notag
\\
&\frac{\sqrt{n\bar{m}} + 1}{2\sigma n\sqrt{\bar{m}}}\|w_\mathcal{Q}\|^2
+\frac{1}{2n\sqrt{\bar{m}}}\sum_{i=1}^n\|(A_i-I_d)w_\mathcal{Q} + b_i\|^2
\notag
\\
&+ \frac{1}{n}\sum_{i=1}^n\frac{1}{m_i}\!\sum_{j=1}^{m_i}(y_{ij} \!-\! \langle A_iw_{\mathcal{Q}}\! +\! b_i, x_{ij}\rangle)^2+ \const. 
\label{gaussian-squared-loss-bound}
\end{align}

Minimizing the right hand side of \eqref{gaussian-0-1-bound} 
or \eqref{gaussian-squared-loss-bound} with respect to $w_{\mathcal{Q}}$, 
we obtain a data-dependent hyperposterior that induces prior distributions 
that are adjusted optimally (in the sense of the bound) to the task environment.

\subsection{Representation Transfer}
\label{representation-transfer}
A second assumption commonly made in multitask or lifelong learning 
is that the weight vectors for all tasks lie in low-dimensional subspace. 
Theorem~\ref{theorem} also allows us to learn such a subspace in a principled way.

We again assume that $\mathcal{X}\subset\mathbb{R}^d$ and 
$H$ is a set of linear predictors. 
We represent $k$-dimensional subspaces of $\mathbb{R}^d$ by $d\times k$ 
matrices with orthogonal columns, i.e.\ elements of the Stiefel 
manifold $V_{d,k}$. 
As hyperprior, we want all subspaces to be equally likely, so we 
set $\mathcal{P}$ to the uniform distribution over $V_{d,k}$~\cite{Downs}
\begin{align}
p_{\mathcal{P}}(B) &= \frac{1}{C_0} \:\:\: \text{for any}\ B \in V_{d,k},
\intertext{where $C_0 = {}_0F_1(\frac{1}{2}d,0)$. 
As hyperposterior, $\mathcal{Q}$, we want a distribution that concentrates 
its probability mass around a specific subspace, $M$. 
We choose a special case of Langevin distribution, $D(I_k,M)$, 
}
p_{\mathcal{Q}}(B) &= \frac{1}{C_1}\exp(\tr(M^\top B)) \:\:\: \text{for any}\ B \in V_{d,k},
\label{langevin}
\end{align}
where $C_1 = {}_0F_1(\frac{1}{2}d,\frac{1}{4}M^\top M)$. 
The only free parameter is $M\in V_{d,k}$, i.e.\ a $d\times k$ matrix with $M^\top M=I_k$
that represents the "most promising subspace". 
Equation~\eqref{langevin} can be interpreted as an analog of the Gaussian distribution on $V_{d,k}$,
with mode $M$ and unit variance. 
In the special case of $k=1$, it reduces to the better known Von Mises distribution
on the unit circle~\cite{Downs}.

As in the previous section we use Gaussian distributions for prior and posterior, but defined only 
within the subspaces sampled from $\mathcal{P}$ or $\mathcal{Q}$. 
For the prior, $P$, we choose a Gaussian with zero mean 
and variance $\sigma I_k$. The posterior, $Q$, is a shifted Gaussian with variance $\sigma I_k$ and mean $w_Q$ in the same subspace.
As in the previous section we use ridge regression as learning algorithm, but again only within 
the subspace determined by the prior, 
\begin{equation}
w_Q = \argmin_w\!\left(\!\|w\|^2\!+\!\frac{C}{m}\sum_{i=1}^{m}(y_{i}\!-\! \langle w, B^\top x_{i}\rangle)^2\!\right)\!,
\end{equation}
where $B$ is the matrix representing the subspace, such that $B^\top x$ 
is the projected representation of the training data in this subspace. 

To obtain an objective function for learning $M$, we first compute the complexity terms 
in the bound~\eqref{main_result_rw}. 
$\KL(\mathcal{Q}\|\mathcal{P})$ is a constant independent of $M$: 
$\mathcal{P}$ is uniform, so $\KL(\mathcal{Q}\|\mathcal{P})$ depends only on 
the differential entropy of $\mathcal{Q}$. This itself is a constant independent 
of the parameter matrix\footnote{For any $M\in V_{d,k}$ there exits an orthogonal matrix $L\in \mathbb{R}^{d\times d}$ 
such that $LM = J = \{\delta_{ij}\}\in \mathbb{R}^{d\times k}$. 
Therefore if $B\sim D(I_k, M)$, than $LB\sim D(I_k, LM)=D(I_k, J)$. So, the entropy of $D(I_k, M)$ 
is equal to the entropy of $D(I_k, J)$ for any $M$.}. 
Furthermore, we have $\KL(Q_i(S_i,P)\|P)=\frac{1}{2\sigma}\|w_i(B)\|^2$, where $B$ is the representation of the selected subspace.
In combination, we get the following bound 
\begin{align}
\er(M) &\leq \eer(M) + \frac{1}{2\sigma n\sqrt{\bar{m}}}\sum_{i=1}^n\underset{B\sim D(I_k,M)}{\mathbf{E}}\|w_i(B)\|^2
\notag 
\\ + \const &= \frac{1}{n}\sum_{i=1}^n \E_{B\sim\mathcal{Q}}\left\{ \eer( w_i(B) ) + \frac{1}{2\sigma\sqrt{\bar m}} \| w_i(B) \|^2  \right\}, 
\label{obj_lang}
\notag
\\&\qquad\qquad + \const.
\end{align}
where $w_i(B) = \frac{C}{m_i}\left(I_k + \frac{C}{m_i} B^\top X_iX_i^\top B\right)^{-1}\!\!B^\top X_iY_i$.
%
We see that a representation, $M$, can be 
considered promising for future tasks, if itself as well as the subspaces close to it allow classification 
with small loss and small weight vector norm (i.e.\ large margin) for all observed tasks. 

\section{Experiments}
In this section, we demonstrate how learning priors distributions by minimizing 
the bounds~\eqref{gaussian-0-1-bound}, \eqref{gaussian-squared-loss-bound} and \eqref{obj_lang} 
can improve prediction performance in real prediction tasks. 
To position our results with respect to previous work on parameter and representation transfer, 
we compare to adaptive ridge regression (ARR), i.e.\ Equation~\eqref{ASVM} the prior $w_{pr}$ set to the average of the weight vectors from the observed tasks,
and with the ELLA algorithm~\cite{Ruvolo} that learns a subspace representation using structured
sparsity constraints, also with squared loss.
We also report results for ordinary ridge regression without any knowledge transfer. 

We perform experiments on three public datasets:
 
\textit{Land Mine Detection}~\citep{Xue:2007}. This dataset consists of 14820 data points. 
For each data point there are 9 features extracted from radar images and a binary label $0$ or $1$ corresponding to landmine or clutter. 
We also add a bias term, resulting in $d=10$ features.  
Data points are collected from 29 geographical regions 
and we treat each region as a binary classification task.

\textit{London School Data.} This is a regression dataset, containing exam scores of 15362 students from 139 schools. 
Each student is described by 4 school-specific, 3 student-specific features and a year of examination. 
We use the same procedure as in \citep{Argyriou, Kumar, Ruvolo} to encode them in a set of binary features.
We also add a bias term, so the final data dimensionality is $d=28$. Each school constitutes a task.

\textit{Animals with Attributes Dataset}~\citep{lampert-tpami2013}. This dataset contains 30475 images from 50 classes. 
Each image comes with a 2000-dimensional feature vector, that we reduced to 100 dimensions using PCA. 
We $l_2$-normalize the resulting feature vectors and add a bias term. We select the largest 
class, \emph{collie}, and form 49 binary classification tasks, each of them is a classification 
of \emph{collie} versus one of the remaining classes. 
For each task we use $2\%$ of the data (approximately 20 images) available for \emph{collie} class  
and the same amount of images from the another task, such that data between different tasks does not 
overlap.  
\subsection{Parameter Transfer}
\label{PL-G}
We first perform experiments on prior learning in the setup of parameter transfer, as described in Section~\ref{parameter-transfer},
calling the resulting algorithm \emph{Prior Learning with Gaussian hyperprior (PL-G)}.
For the classification tasks (Landmine and Animals), we optimize the bound~\eqref{gaussian-0-1-bound}. 
To do so we replace $\Phi$ by its convex relaxation, $\Phi_{\textit{cvx}}(z) = \frac{1}{2} - \frac{z}{\sqrt{2\pi}}$,
if $z\leq 0$ and $\Phi_{\textit{cvx}}(z)=\Phi(z)$ otherwise, and use the conjugate gradient method for finding the minimum. 

For the regression tasks (Schools) we first divide labels (examination scores) by their maximum value. 
This allows us to assume that the squared loss will not exceed $1$. We optimize ~\eqref{gaussian-squared-loss-bound}, 
and due to the squared loss, the problem has a closed form solution:
\begin{align}
w_\mathcal{Q} =& -\Big(D+\frac{\sqrt{n\bar{m}}+1}{\sigma n\sqrt{\bar{m}}}I_d 
\\
&+ \frac{1}{n\sqrt{\bar{m}}}\sum_{i=1}^n A_i'^\top A_i'\Big)^{-1}
\notag
\Big(c+\frac{1}{n\sqrt{\bar{m}}}\sum_{i=1}^n A_i'^\top b_i\Big),\\
\label{gaus_squared_solution}
\text{where}& \;A_i'\! = \!A_i-I_d,\: D=\frac{2}{n}\sum_{i=1}^n\frac{1}{m_i}A_i^\top X_iX_i^\top A_i,
\\
c^\top =& \frac{2}{n}\sum_{i=1}^n\frac{1}{m_i}\Big(\frac{C}{m_i} Y_i^\top X_i^\top A_i^\top X_iX_i^\top A_i-Y_i^\top X_i^\top A_i\Big).
\notag
\end{align}
To make results comparable with the baseline algorithms, we report the squared error 
multiplied by the squared value of the maximum examination score. 
\begin{figure*}[t]
\centering
\subfigure[Landmines -- Parameter Transfer]{
\label{subfig:landmine_gaussian}
\includegraphics[width=.3\textwidth]{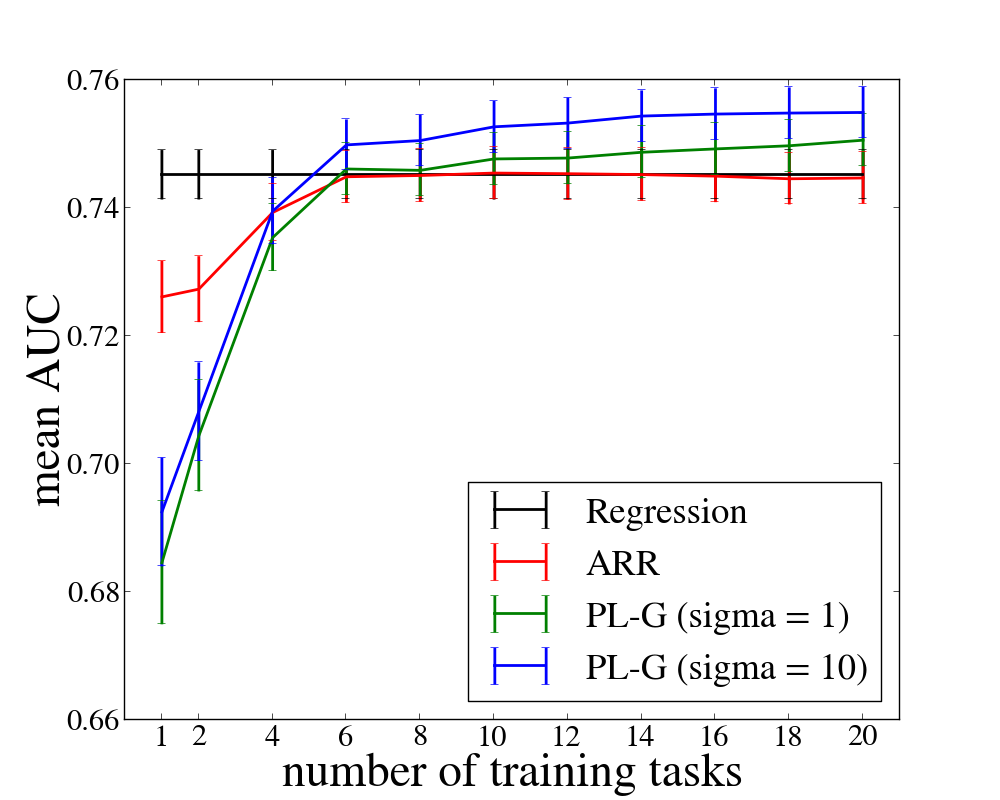}
}
\subfigure[Schools -- Parameter Transfer]{
\label{subfig:schools_gaussian}
\includegraphics[width=.3\textwidth]{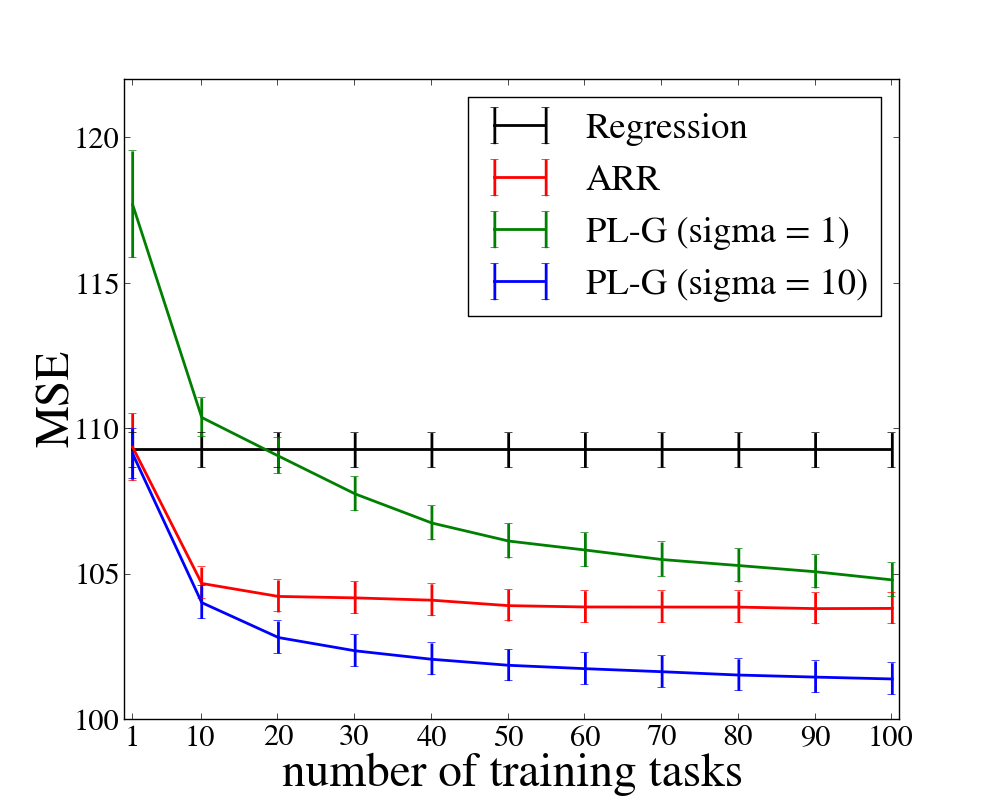}
}
\subfigure[Animals -- Parameter Transfer]{
\label{subfig:animals_gaussian}
\includegraphics[width=.3\textwidth]{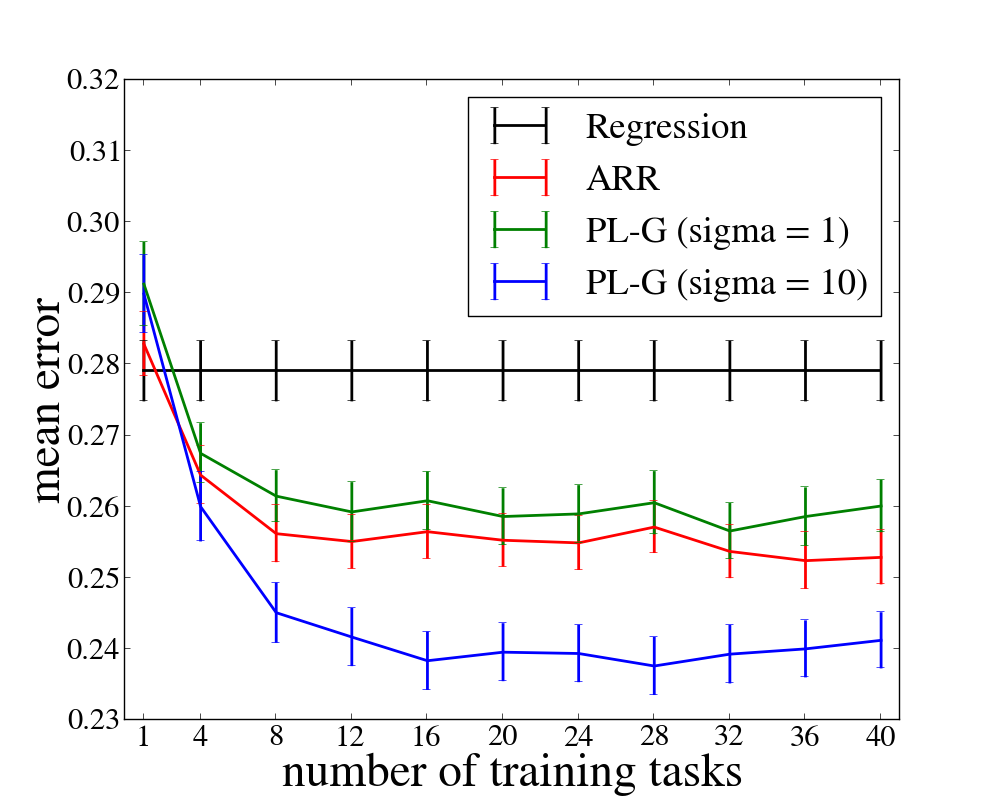}
}
\subfigure[Landmines -- Representation Transfer]{
\label{subfig:landmine_langevin}
\includegraphics[width=.3\textwidth]{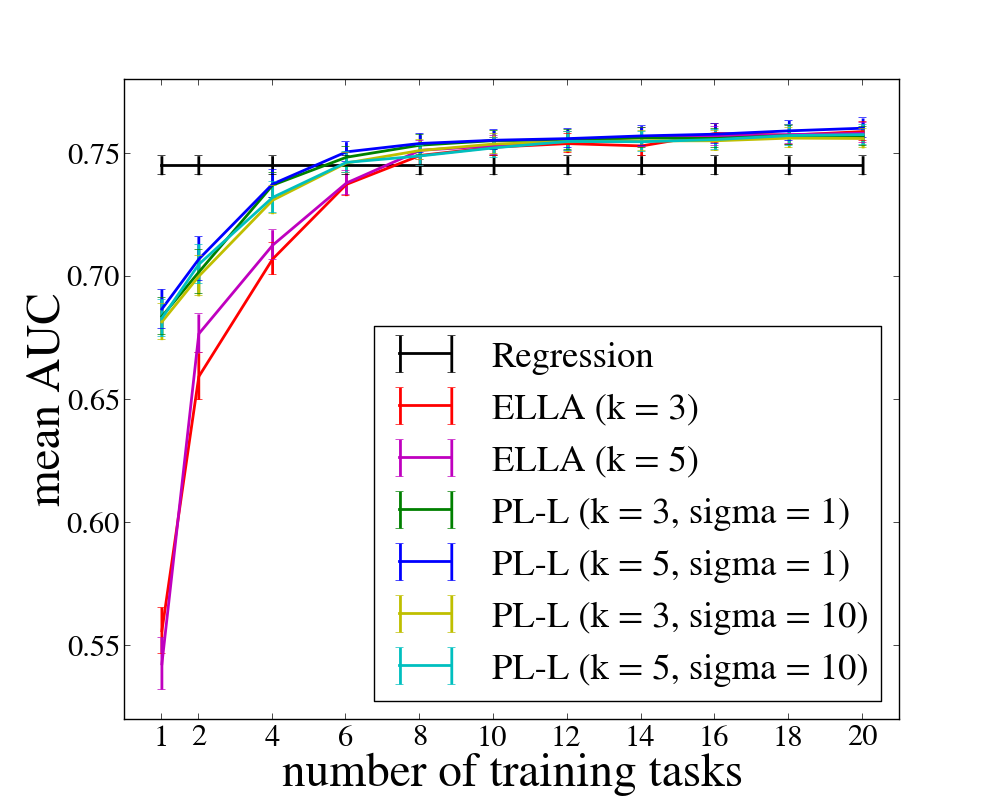}
}
\subfigure[Schools -- Representation Transfer]{
\label{subfig:schools_langevin}
\includegraphics[width=.3\textwidth]{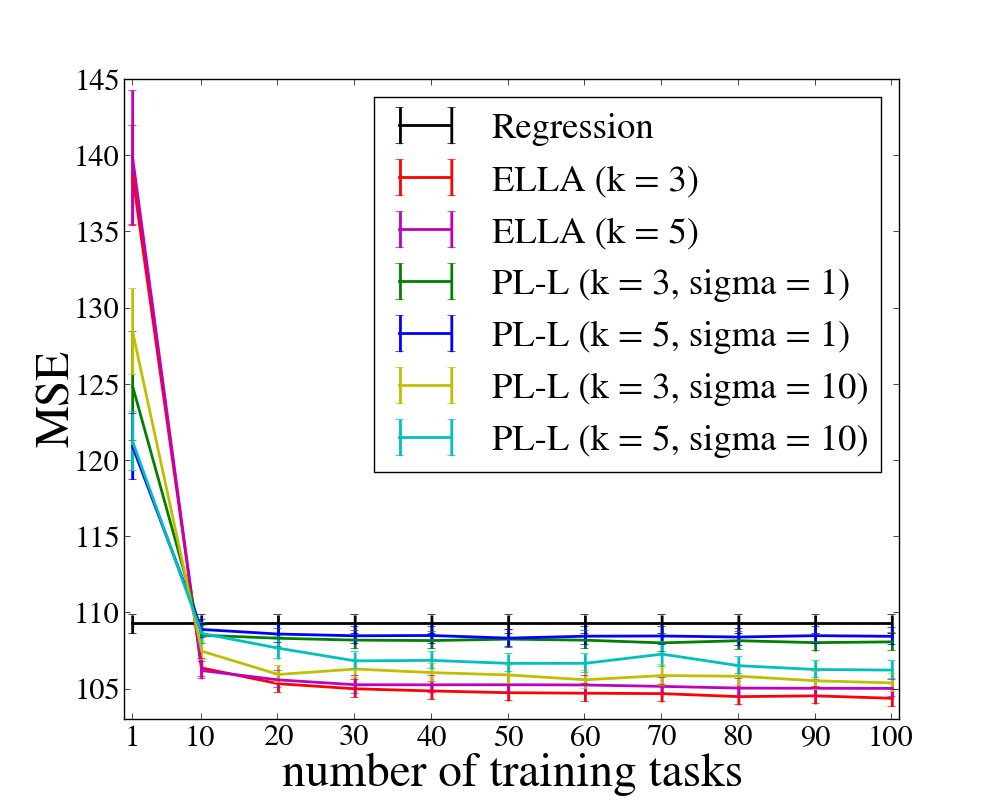}
}
\caption{Results of the experiments on three datasets. For details, see Sections~\ref{PL-G} and \ref{PL-L}}
\label{fig:results}
\end{figure*}
\subsection{Representation Transfer}
\label{PL-L}
In a second set of experiments, we implement the idea of representation transfer from 
Section~\ref{representation-transfer}, calling the algorithm \emph{Prior Learning with Langevin hyperprior (PL-L)}.

As in the case of parameter transfer, we use $0/1$ loss to measure the quality
in classification tasks.
For the regression task we apply the same scaling procedure as discussed in Section \ref{PL-G} and use truncated squared loss. 
Both of these loss functions can be upper-bounded by the standard squared loss, which we do 
to obtain tractable expressions for the right hand side of the Inequality~\eqref{obj_lang}.
To be able to optimize the expression \eqref{obj_lang} numerically, we approximate it 
by replacing all expectations over $\mathcal{Q}$ by their values at its mode, $M$. 
Furthermore, we replace the error of any Gibbs predictor by the error of the 
deterministic predictor defined by the mode of the posterior distribution, $w_i(M)$.
The result is a quadratic optimization problem over the Stiefel manifold, which we solve using \emph{gradient descent with curvilinear search}~\cite{Wen}.

\subsection{Evaluation procedure}
To get reliable estimates of the transfer risk, we repeat the 
following experimental procedure 100 times for each dataset 
and calculate the mean prediction errors and standard errors of the mean.

In each experiment, we set aside a subset of tasks as unobserved 
(9 in Landmines, 39 in Schools, 9 in Animals). These are not used 
during any part of training, but only to evaluate the methods on 
"future" tasks.
Of the remaining tasks we use different fractions to measure 
the effect of a different number of observed tasks. 
The algorithms described in Section~\ref{PL-G} and~\ref{PL-L} and ARR have 
one free parameter, the regularization strength $C\in\{10^{-3},\dots,10^{3}\}$. 
We choose this using 3-fold cross-validation in the following way. 
We split the data of each task into three parts: 
we use the first third of all tasks jointly to learn a prior. 
To evaluate this prior, we then train individual predictors 
using the second part of the data, and test their quality 
on the third part. 
For the ELLA algorithm, we use the same procedure to set the 
regularization strength $\mu$, the remaining parameters we 
leave at their default values. 
For the baseline, we set the regularization using ordinary 
3-fold cross-validation.

\subsection{Results}
\label{sec:results}
The results of the experiments on all three datasets are shown in Figure~\ref{fig:results}. 
Since classes in Landmine dataset are unbalanced, for this problem we report the value 
of area under the ROC curve (AUC, bigger value means better prediction). 
Tasks in the Animals dataset are balanced, so for them we report the standard mean $0/1$ error. 
Since the dataset was too large for the subspace methods, we only report results for the 
parameter transfer techniques.
For the experiment on Schools dataset we report the mean squared error (MSE, smaller values 
mean better prediction). 

As a first observation, Figure~\ref{fig:results} confirms the findings of 
previous work that better prediction can be achieved by transferring information from related tasks. 
Overall, it shows that PL-G and PL-L are comparable to the existing, manually designed, techniques. 
Given sufficiently many tasks, they are able to improve the prediction accuracy over the baseline. 
As an illustration of the hyperprior concept, we show results for PL-G with two different 
values for the Gaussian hyperprior variance 
(Figures~\ref{subfig:landmine_gaussian}, \ref{subfig:schools_gaussian}, \ref{subfig:animals_gaussian}). 
For $\sigma=1$, the adaption pursues in a very conservative way and many tasks are needed to 
find a reliable hyperposterior. 
With $\sigma=10$, convergence is faster, and PL-G achieves results comparable with ARR or even slightly better.
For practical tasks, the hyperprior should possibly be chosen by model selection.

The results for representation transfer (Figures~\ref{subfig:landmine_langevin} and \ref{subfig:schools_langevin}) 
show that the improvements achieved by PL-L are comparable to the ELLA algorithm.
As in the case of parameter transfer, we show results for two different values of the Gaussian prior variance: $\sigma = 1$ and $\sigma = 10$.
While for the Landmine dataset (Figure~\ref{subfig:landmine_langevin}), there is no significant difference in the performance 
for different values of parameters $k$ and $\sigma$, for the Schools dataset (Figure~\ref{subfig:schools_langevin}) the choice of these parameters plays a bigger role. 
We see that the improvements of PL-L with $\sigma =10$ are almost the same as the one achieved by ELLA, while for $\sigma = 1$ they are smaller.      
This might be the effect of too strict hyperparameters that cause the method to be more conservative than necessary. 
Another possible reason for the difference in accuracy is that ELLA makes additional sparsity assumption, which PL-L does not.
\section{Conclusion}
In this work we studied lifelong learning from a theoretical perspective.
Our main result is a generalization bound in a PAC-Bayesian framework (Theorem~\ref{theorem}). 
On the one hand, the bound is very general, allowing us to recover two existing 
principles for transfer learning as special cases: the transfer of classifier parameters, 
and the transfer of subspaces/representations.
On the other hand, the bound consists only of observable quantities, such that 
it can be used to derive principled algorithms for lifelong learning that achieve
results comparable with existing manually designed methods. 

A further use of the bound we see is in using it to study the implicit assumptions 
of possible learning methods. For example, a method obtained by means of a unimodal 
hyperposterior will require all tasks to be related to each other. 
In future work, we plan to explore the potential of integrating more 
realistic assumptions, such as hierarchical or multi-modal hyperposteriors. 
A second interesting direction will be to relax the condition that tasks are sampled 
i.i.d.\ from an environment, e.g.\ into the direction of learning tasks of continuously
improving difficulty~\cite{bengio2009curriculum}.

\paragraph{Acknowledgements.} 
We thank Shai Ben-David, Olivier Catoni and Emilie Morvant
for helpful discussions. 
This work was in parts funded by the European Research Council 
under the European Union's Seventh Framework Programme (FP7/2007-2013)/ERC 
grant agreement no 308036.

\appendix
\section{Proof of Lemma~\ref{lemma:main_lemma}}
\label{appendix}
In the proof we will make use of \emph{Hoeffding's Lemma}:
\begin{lemma}\cite{Hoeffding:1963}
\label{hoeffding_lemma}
 Let $X$ be a real-valued random variable such that $Pr(X\in[a,b]) = 1$ and let $\xi=\E\{X\}$. 
Then
\begin{equation}
\mathbf{E}\left[e^{\lambda(\xi - X)}\right]\leq e^{\frac{\lambda^2(b-a)^2}{8}}.
\end{equation}
\end{lemma}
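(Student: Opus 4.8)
The plan is to prove Hoeffding's Lemma by controlling the cumulant generating function of the centered variable $Z := \xi - X$. Note that $\E[Z] = 0$ and that $Z$ takes values in the interval $[\xi - b, \xi - a]$, which has width $b - a$; moreover the claimed inequality is exactly $\E[e^{\lambda Z}] \leq e^{\lambda^2(b-a)^2/8}$. I would therefore introduce $G(\lambda) := \log \E[e^{\lambda Z}]$ and aim to show $G(\lambda) \leq \lambda^2(b-a)^2/8$ for every $\lambda$; exponentiating then yields the statement.

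First I would record the behavior of $G$ at the origin. Since $\E[e^{0\cdot Z}] = 1$, we have $G(0) = 0$. Because $Z$ is bounded, its moment generating function is finite and smooth for all $\lambda$, so we may differentiate under the expectation to get $G'(\lambda) = \E[Z e^{\lambda Z}]/\E[e^{\lambda Z}]$, whence $G'(0) = \E[Z] = 0$.

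The heart of the argument is a uniform bound on $G''$. A direct computation gives $G''(\lambda) = \E_\lambda[Z^2] - (\E_\lambda[Z])^2$, where $\E_\lambda$ denotes expectation under the exponentially tilted (Gibbs) measure whose density with respect to the law of $Z$ is proportional to $e^{\lambda Z}$; in other words, $G''(\lambda)$ is the variance of $Z$ under this tilted measure. Since tilting does not enlarge the support, $Z$ still lies in an interval of width $b-a$ under $\E_\lambda$, and the variance of any random variable confined to an interval of width $b-a$ is at most $(b-a)^2/4$: taking the interval midpoint $c$ as a trial center, the mean minimizes mean-squared deviation, so $\mathrm{Var}_\lambda(Z) \leq \E_\lambda[(Z-c)^2] \leq (b-a)^2/4$. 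Hence $G''(\lambda) \leq (b-a)^2/4$ for all $\lambda$.

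Finally I would combine these facts via Taylor's theorem with Lagrange remainder: for some $\theta$ between $0$ and $\lambda$ we have $G(\lambda) = G(0) + \lambda G'(0) + \tfrac{1}{2}\lambda^2 G''(\theta) = \tfrac{1}{2}\lambda^2 G''(\theta) \leq \lambda^2(b-a)^2/8$, and exponentiating gives the lemma. The only nontrivial step, and thus the main obstacle, is the variance bound $G''(\lambda) \leq (b-a)^2/4$; everything else is bookkeeping. An alternative route that avoids differentiation is to bound $e^{\lambda z}$ pointwise by the secant line joining its endpoint values (using convexity of $z \mapsto e^{\lambda z}$), take expectations using $\E[Z] = 0$, and then show that the logarithm of the resulting two-point expression is at most $\lambda^2(b-a)^2/8$ by a second-derivative estimate on an auxiliary single-variable function; this merely trades the tilted-variance bound for that estimate.
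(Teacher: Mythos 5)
Your proof is correct, but note that the paper itself contains no proof of this statement to compare against: Lemma~\ref{hoeffding_lemma} is imported by citation from Hoeffding (1963) and used as a black box in the proof of Lemma~\ref{lemma:main_lemma}. Your argument is the standard cumulant-generating-function proof (as in Boucheron--Lugosi--Massart): with $Z=\xi-X$ you correctly observe $\E[Z]=0$ and that $Z$ is supported in an interval of width $b-a$, so $G(\lambda)=\log\E[e^{\lambda Z}]$ is smooth with $G(0)=G'(0)=0$; the identification of $G''(\lambda)$ as the variance of $Z$ under the exponentially tilted measure is right, the tilted measure is absolutely continuous with respect to the law of $Z$ so the support (hence the width-$(b-a)$ confinement) is preserved, and the midpoint trick $\mathrm{Var}_\lambda(Z)\leq\E_\lambda[(Z-c)^2]\leq(b-a)^2/4$ is the correct Popoviciu-type bound; Taylor with Lagrange remainder then gives $G(\lambda)\leq\lambda^2(b-a)^2/8$ for all $\lambda$, which is exactly the claim after exponentiating. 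For the record, Hoeffding's original route is essentially the alternative you sketch at the end: bound $e^{\lambda z}$ by the secant line via convexity, take expectations using $\E[Z]=0$, and verify $\varphi(u)=-pu+\log(1-p+pe^{u})\leq u^2/8$ by checking $\varphi(0)=\varphi'(0)=0$ and $\varphi''\leq 1/4$ (the tilted-variance bound in disguise, since $\varphi''$ is the variance of a Bernoulli-type variable). The tilting proof you chose buys a cleaner argument with no auxiliary function, at the mild cost of justifying differentiation under the expectation, which boundedness of $Z$ handles; either version would serve as a self-contained replacement for the paper's citation.
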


We will also require the following property of the Kullback-Leibler divergence 
that holds for any $\lambda>0$ and can be proved by convex duality~\citep{Seeger}:
\begin{equation}
\underset{f\sim Q}{\mathbf{E}} g(f)\leq\frac{1}{\lambda}\left(\KL(Q\|P)+\log\underset{f\sim P}{\mathbf{E}}e^{\lambda g(f)}\right).
\label{KL-inequality}
\end{equation} 

We now prove Lemma~\ref{lemma:main_lemma}. 
First, we apply~\eqref{KL-inequality} to $g(f) = \sum_{k=1}^l\xi_k(f) - \sum_{k=1}^l g_k(f,X_k)$, obtaining
\begin{align}
\notag
\underset{f\sim\rho}{\mathbf{E}}\left(\sum_{k=1}^l\xi_k(f) - \sum_{k=1}^l g_k(f,X_k)\right)\leq
\\
\frac{1}{\lambda}\left(\KL(\rho\|\pi)+\log\underset{f\sim \pi}{\mathbf{E}}e^{\lambda g(f)}\right).
\label{KL}
\end{align}
Note, that
\begin{equation}
e^{\lambda g(f)} = \prod_{k=1}^l\exp(\lambda(\xi_k(f) - g_k(f,X_k))),
\end{equation}
since for any fixed $f$ the factors are independent. This allows us to apply Hoeffding's Lemma~\ref{hoeffding_lemma} to each factor:
\begin{equation}
\underset{X_1\sim\mu_1}{\mathbf{E}}\!\cdots\!\underset{X_l\sim\mu_l}{\mathbf{E}}e^{\lambda g(f)} \leq \exp\Big(\frac{\lambda^2}{8}\sum\nolimits_{k=1}^l(b_k-a_k)^2\Big).
\end{equation}
By taking the expectation over $f\sim\pi$ we obtain
\begin{equation}
\underset{f\sim\pi}{\mathbf{E}}\underset{X_1\sim\mu_1}{\mathbf{E}}\!\cdots\!\underset{X_l\sim\mu_l}{\mathbf{E}}e^{\lambda g(f)}\! \leq\! \exp\Big(\frac{\lambda^2}{8}\sum\nolimits_{k=1}^l(b_k\!-\!a_k)^2\Big).
\end{equation}
Since $\pi$ is fixed and does not depend on $X_1,\dots,X_l$, we can exchange the order of expectations. 
By applying Markov's inequality with respect to expectations over $X_1,\dots,X_l$ we obtain that 
with probability at least $1-\delta$:
\begin{equation}
\log\underset{f\sim\pi}{\mathbf{E}}e^{\lambda g(f)} \leq \frac{\lambda^2}{8}\sum\nolimits_{k=1}^l(b_k-a_k)^2-\log\delta.
\label{lasteq}
\end{equation}
We obtain~\eqref{lemma:main_lemma} by combining~\eqref{lasteq} and \eqref{KL}.

\bibliography{bibfile}
\bibliographystyle{icml2014}

\appendix

\end{document}